\definecolor{mymauve}{HTML}{E60B42}
\definecolor{echonavy}{HTML}{0054B2}
\newcommand{\vocab}[1]{\textit{\textbf{#1}}}
\title[Learnable Commutative Monoids for Graph Neural Networks]{Learnable Commutative Monoids for \\ Graph Neural Networks}
\author[E. Ong and P. Veli\v{c}kovi\'{c}]{%
Euan Ong\\
\institute{University of Cambridge}\\
\email{elyro2@cam.ac.uk}\And
Petar Veli{\v c}kovi{\'c}\\
\institute{DeepMind / University of Cambridge}\\
\email{petarv@deepmind.com}
}
\begin{document}

\maketitle

\begin{abstract}
Graph neural networks (GNNs) have been shown to be highly sensitive to the choice of aggregation function. While summing over a node's neighbours can approximate any permutation-invariant function over discrete inputs, \citet{Cohen-Karlik2020} proved there are set-aggregation problems for which summing cannot generalise to unbounded inputs, proposing recurrent neural networks regularised towards permutation-invariance as a more expressive aggregator. We show that these results carry over to the graph domain: GNNs equipped with recurrent aggregators are competitive with state-of-the-art permutation-invariant aggregators, on both synthetic benchmarks and real-world problems. However, despite the benefits of recurrent aggregators, their $O(V)$ depth makes them both difficult to parallelise and harder to train on large graphs. Inspired by the observation that a well-behaved aggregator for a GNN is a commutative monoid over its latent space, we propose a framework for constructing learnable, commutative, associative binary operators. And with this, we construct an aggregator of $O(\log V)$ depth, yielding exponential improvements for both parallelism and dependency length while achieving performance competitive with recurrent aggregators. Based on our empirical observations, our proposed \emph{learnable commutative monoid} (LCM) aggregator represents a favourable tradeoff between efficient and expressive aggregators.
\end{abstract}

\section{Introduction}

When dealing with irregularly structured data \citep{Bronstein2021}, neural networks typically need to process data of arbitrary sizes. In such scenarios, the heart of the network is arguably its \emph{aggregation function}---a function that reduces a collection of neighbour feature vectors into a single vector. Indeed, graph neural networks (GNNs) have been shown empirically to be highly sensitive to the choice of aggregator \citep{Velickovic2019,richter2020normalized}, with a wide range of aggregators (e.g.~sum, max and mean) and their combinations \citep{Corso2020} in common use.

In this paper, we offer a new perspective for studying aggregators, with clear theoretical and practical implications. It can be said that the \emph{true} objective of choosing an aggregator is to make it as simple as possible (i.e. to minimise the sample complexity required) for the parameters of the GNNs to exploit that aggregator in a way that makes it easier to solve the learning problem. Specifically, we study this in the context of learning to \emph{align} the GNN's aggregator to a desirable target aggregation function (as defined in \citep{Xu2019}). It is already a known fact that higher alignment implies reduced sample complexity \citep{Xu2019}, and in the context of algorithmic reasoning, it is well-known that a neural network will be better at learning to imitate an algorithm if its aggregator matches that of the algorithm it is trying to imitate \citep{Velickovic2019,Xu2020}.

However, beyond the realm of learning a task with a concrete aggregator, many real-world problems offer more challenging settings, wherein the optimal aggregator to learn is not clear---but unlikely to be a trivial fixed aggregator. To formalise this notion, while preserving the useful assumption of permutation invariance, we leverage \emph{commutative monoids} as a formalism for both the aggregators supported by GNNs and the (potentially unknown) target aggregators one would wish to align to. This formalism allows us to derive several relevant results, including the fact that using any \emph{fixed} commutative monoid $F$ (e.g.~sum or max) as an aggregator would compel the GNN to learn a \emph{commutative monoid homomorphism} from $F$ to the target commutative monoid, purely from data. We hypothesise that this is often difficult to do robustly, and verify our hypothesis by demonstrating several instances (both synthetic and real-world) where fixed aggregators (including combinations of them \citep{Corso2020}) fail to generalise.

Our perspective, inspired by the functional programming motif of \emph{folds} (or \emph{catamorphisms}) over arbitrary data structures, leads us to consider flexible and learnable aggregation functions, which can more easily fit a wide range of commutative monoids directly, without needing to learn such a homomorphism. The most popular such aggregator has previously been the RNN (i.e. `a fold over a list') -- used, for instance, in GraphSAGE \citep{hamilton2017inductive}. The reason for RNNs' expressive power is simple: their usage of a \emph{hidden recurrent state} allows them to break away from the constraints of commutative monoids and aggregate inputs more flexibly. However, while empirically powerful, the sequential structure of RNN aggregators leads to clear shortcomings in efficiency: if an RNN had learnt to aggregate $n$ neighbours under a commutative monoid operation $\oplus$, it would do so with a depth that is linear in $n$, as $((((\dots(\mathbf{x}_1\oplus \mathbf{x}_2)\oplus\mathbf{x}_3)\oplus\dots)\oplus\mathbf{x}_{n-1})\oplus\mathbf{x}_n)$.

But, by folding over a \emph{binary tree} instead of a \emph{list} (in other words, rearranging the order of operations to a balanced binary tree $(\dots((\mathbf{x}_1\oplus\mathbf{x}_2)\oplus(\mathbf{x}_3\oplus\mathbf{x}_4))\oplus\dots\oplus(\mathbf{x}_{n-1}\oplus\mathbf{x}_n)\dots)$), we derive an aggregator that achieves a favourable trade-off between flexibility and efficiency, empirically retaining most of the performance of RNNs while having a depth that is logarithmic in $n$. We also demonstrate how such layers can be effectively constrained and regularised to respect the commutative monoid axioms (essentially creating a \emph{learnable commutative monoid}), leading to further gains in robustness.

\section{Motivation}
\label{sec:motivation}

Before exploring GNN aggregators, we first review the structure of a GNN. For a graph $G=(V,E)$ whose nodes $u$ have one-hop neighbourhoods $\cN_u=\cbr{v\in V \mid (v,u)\in E}$ and features $\vv{x}_u$, a message-passing GNN over $G$ is defined by \citet{Bronstein2021} as $$\vv{h}_u = \phi\br*{\vv{x}_u, \bigoplus_{v\in \cN_u} \psi(\vv{x}_u, \vv{x}_v)}$$
for $\psi$ the \emph{message function}, $\phi$ the \emph{readout function} and $\oplus$ a permutation-invariant \emph{aggregation function}. This GNN `template' can be instantiated in many ways, with different choices of $\phi$, $\psi$ and $\oplus$ yielding popular architectures such as GCNs \citep{kipf2017semisupervised} and GATs \citep{velickovic2018graph}.

\subsection{To learn a complex aggregator is to learn a commutative monoid homomorphism}

So we've seen that, in order to define a GNN, we must define a permutation-invariant aggregator $\oplus$ over its messages. But how can we characterise a permutation-invariant aggregator in general? 

In abstract algebra (and in functional programming), a permutation-invariant aggregator over a set can be described as (maps into and out of) a \vocab{commutative monoid}. A commutative monoid $(M, \oplus, e_\oplus)$ is a set $M$ equipped with a commutative, associative binary operator $\oplus : M \times M \to M$ and an identity element $e_\oplus \in M$ -- in other words, an instance of the following Haskell typeclass, satisfying the identities to the right for all \verb|x y z :: a| (see Snippet~\ref{snippet:mon_intf} in Appendix~\ref{appendix:py_haskell} for a Python version):

\begin{minted}{haskell}
class CommutativeMonoid a =              x <> e == e
  e :: a                                 x <> y == y <> x
  <> :: a -> a -> a                      x <> (y <> z) == (x <> y) <> z
\end{minted}

Intuitively, commutative monoids over a set $M$ are `operations you can use to reduce a multiset, whose members are in $M$, to a single value'. These include GNN aggregators, like \emph{sum-aggregation} $(\R^n, +, \vv{0})$ and \emph{max-aggregation} $(\R^n, \max, \vv{0})$. 
Indeed, \citet{Dudzik2022} observe that, for the aggregation function $\oplus$ of a GNN to be well-behaved (in the sense of respecting the axioms of the multiset monad), it must form a commutative monoid $(S, \oplus, e_\oplus)$ over some subspace $S$ of $\R^n$. 

The vast majority of GNNs choose a fixed permutation-invariant function $\oplus$ (or fixed combinations of them \citep{Corso2020}). While some research \citep{Pellegrini2020, Li2020a} has explored aggregation functions with \emph{learnable parameters}, these functions are only very weakly parameterised, and give us limited additional expressivity.

For problems where we can anticipate the kind of aggregation function we might need,
this approach works well: indeed, choosing a commutative monoid that \emph{aligns} with the algorithm we want our GNN to learn can improve performance both in and out of distribution \citep{Velickovic2019}. But there are many problems (e.g.~those involving learning aggregations over representations of discrete values, or representations encoding many different types of data) for which these monoids may not always be the most natural choice for the aggregation we're trying to learn. So in such cases, $\psi$ and $\phi$ must take on some of the work of mapping our representations into and out of a space where $\oplus$-aggregation makes sense.

Formally, suppose we use a GNN equipped with a fixed commutative monoid aggregator ${\color{echonavy}(F, \oplus, e_\oplus)}$, on a problem for which the `true' aggregation we want to perform is the commutative monoid ${\color{mymauve}(M, \ast, e_\ast)}$ over the GNN's latent space.
What would it take for our GNN to perform $M$-aggregation?
\\
\begin{restatable}{proposition}{hommonoids}
    \label{thm:hom_monoids}
    Let ${\color{mymauve}(M, \ast, e_\ast)}$ and ${\color{echonavy}(F, \oplus, e_\oplus)}$ be commutative monoids. Then for functions $g : M \to F$ and $h : F \to M$, 
    $${\color{mymauve}\bigast_{{\color{black}x\in X}} x} = {\color{mymauve}h\br*{{\color{echonavy}\bigoplus_{{\color{black}x\in X}} g({\color{mymauve}x})}}}$$
    for all finite multisets $X$ of $M$, if and only if $h$ is both a left inverse of $g$ and a surjective monoid homomorphism from $\abr{g(M)} \subseteq F$\footnote{$\abr{g(M)}$ denotes the submonoid of $F$ generated by $g(M)$.} to $M$.
\end{restatable}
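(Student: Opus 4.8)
The plan is to prove the biconditional by handling the two implications separately. The reverse implication is the easy ``sufficiency'' check; the forward implication is the substantive one, and the strategy there is to extract each claimed property of $h$ by specialising the hypothesised identity $\bigast_{x\in X}x = h\!\left(\bigoplus_{x\in X}g(x)\right)$ to well-chosen multisets $X$. Throughout, commutativity and associativity of both $\ast$ and $\oplus$ are used silently just to make the multiset aggregations $\bigast$ and $\bigoplus$ well-defined.

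For the reverse direction, assume $h\circ g = \mathrm{id}_M$ and that $h$ restricts to a monoid homomorphism $\langle g(M)\rangle \to M$. Given a finite multiset $X$ over $M$, each $g(x)$ for $x\in X$ lies in $g(M)\subseteq\langle g(M)\rangle$, and $\langle g(M)\rangle$ is closed under $\oplus$, so $\bigoplus_{x\in X}g(x)\in\langle g(M)\rangle$; applying the homomorphism property and then the left-inverse property gives $h\!\left(\bigoplus_{x\in X}g(x)\right) = \bigast_{x\in X}h(g(x)) = \bigast_{x\in X}x$, with the empty multiset covered since a homomorphism sends $e_\oplus\mapsto e_\ast$. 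Note surjectivity is not needed here.

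For the forward direction, assume the identity holds for every finite multiset. Taking $X=\{m\}$ a singleton yields $m = h(g(m))$, i.e.\ $h\circ g = \mathrm{id}_M$; this in turn makes $h$ map $g(M)$, hence $\langle g(M)\rangle$, onto $M$, so surjectivity is automatic. Taking $X=\emptyset$ yields $e_\ast = h(e_\oplus)$. For multiplicativity of $h$ on $\langle g(M)\rangle$: by definition every element of $\langle g(M)\rangle$ has the form $\bigoplus_{x\in A}g(x)$ for some finite multiset $A$ over $M$ (the empty product giving $e_\oplus$); given $a = \bigoplus_{x\in A}g(x)$ and $b = \bigoplus_{x\in B}g(x)$, apply the identity to $A$, to $B$, and to the multiset sum $A\uplus B$ (using $a\oplus b = \bigoplus_{x\in A\uplus B}g(x)$) to get $h(a\oplus b) = \bigast_{x\in A\uplus B}x = \left(\bigast_{x\in A}x\right)\ast\left(\bigast_{x\in B}x\right) = h(a)\ast h(b)$. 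Combined with $h(e_\oplus)=e_\ast$, this shows $h|_{\langle g(M)\rangle}$ is a surjective monoid homomorphism, and $h\circ g = \mathrm{id}_M$, as required.

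The main obstacle is not computational but a matter of getting the bookkeeping right: one must (i) resist the temptation to conclude $h$ is a homomorphism on all of $F$ — it can only be pinned down on the submonoid reachable by $\oplus$-combining images of $g$, which is exactly why the statement restricts to $\langle g(M)\rangle$ — and (ii) notice that an element of $\langle g(M)\rangle$ may admit many representations as a product of $g$-images; this causes no trouble because $h$ is a genuine function, so the hypothesised identity forces all such representations to yield the same $\ast$-product, and the argument only ever needs to fix one representation per element.
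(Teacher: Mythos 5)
Your proof is correct and follows essentially the same route as the paper's: singletons give the left-inverse property (hence surjectivity), and decomposing elements of $\abr{g(M)}$ as $\bigoplus_{x\in A}g(x)$ and applying the hypothesis to $A$, $B$ and $A\uplus B$ gives multiplicativity. The only cosmetic difference is that you obtain $h(e_\oplus)=e_\ast$ directly from the empty multiset, whereas the paper derives it from the homomorphism property together with surjectivity — your version is marginally cleaner but not a different argument.
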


Now, given Proposition~\ref{thm:hom_monoids} above (proven in Appendix~\ref{appendix:proof_hom_monoids}), suppose we had a trained GNN, parameterised by $\phi : \R^k \times F \to \R^k$ and $\psi : \R^k \times \R^k \to F$, with a fixed $F$-aggregator. Suppose this GNN has \emph{learned to imitate the $M$-aggregation commutative monoid}. We will model this property as there existing functions $\phi' : \R^k \times M \to \R^k$, $\psi' : \R^k \times \R^k \to M$, $g : M \to F$ and $h : F \to M$ such that 
\begin{itemize}
    \item $\phi(\vv{x}_u, {\color{echonavy}\vv{m}_{\cN(u)}}) = \phi'(\vv{x}_u, {\color{mymauve}h({\color{echonavy}\vv{m}_{\cN(u)}})})$
    \item ${\color{echonavy}\psi({\color{black}\vv{x}_u, \vv{x}_v})} = {\color{echonavy}g({\color{mymauve}\psi'({\color{black}\vv{x}_u, \vv{x}_v})})}$
\end{itemize}
and ${\color{mymauve}\bigast_{{\color{black}x\in X}} x} = {\color{mymauve}h\br*{{\color{echonavy}\oplus_{{\color{black}x\in X}} g({\color{mymauve}x})}}}$ for all finite multisets $X$ of $M$. 

(Observe that this implies the following: 
\begin{align*}
    \phi{\br*{{\color{black}\vv{x}_u, {\color{echonavy}\bigoplus_{{\color{black}v\in \cN_u}} \psi(}\vv{x}_u, \vv{x}_v{\color{echonavy})}}}}
    &= \phi'{\br*{{\color{black}\vv{x}_u, {\color{mymauve}h\br*{{\color{echonavy}\bigoplus_{{\color{black}v\in \cN_u}} g({\color{mymauve}\psi'({\color{black}\vv{x}_u, \vv{x}_v})})}}}}}} \\
    &= \phi'{\br*{{\color{black}\vv{x}_u, {\color{mymauve}\bigast_{{\color{black}v\in \cN_u}} \psi'({\color{black}\vv{x}_u, \vv{x}_v})}}}}
\end{align*}

for all nodes $u,v$ in graphs $G$.)

Hence $h$ is a surjective monoid homomorphism from $\abr{g(M)}$ to $M$ (i.e. $M$ is a subquotient of $F$).

So at a high level, \emph{for a GNN with aggregator $F$ to imitate an aggregator $M$, it must learn a function that can decompose into a surjective monoid homomorphism from a submonoid of $F$ to $M$}.

\subsection{Limitations on expressivity and generalisation for constructed aggregators}
\label{subsec:limitations_on_expressivity}

Given this result, what are the implications for prior and present work? %

As has been seen in \citep{Velickovic2019,sanchez2020learning}, it's clear that if our fixed commutative monoid $F$ is aligned with a target monoid $M$ for the problem we want to solve -- intuitively, `if the homomorphism doesn't have to do much work' -- then we can easily learn to imitate $M$. Indeed, if the target homomorphism is linear, and we have appropriate training set coverage, then by \citep{Xu2020} it may well generalise out-of-distribution -- a result that holds (to an extent) in the case of learning to imitate path-finding algorithms such as Bellman-Ford \citep{Velickovic2019}.

But there are many cases where $M$ is more complex, and there is no commonly-used fixed aggregator $F$ for which we can simply apply a linear homomorphism to get from $F$ to $M$. One such example is the problem of \emph{finding the \nth{2}-minimum element in a set}. Here, the desired monoid $M$ is as follows: (Snippet~\ref{snippet:def_2min})

\begin{minipage}{0.5\textwidth}
  \centering
  \begin{minted}{haskell}
type M = (Int, Int)
instance CommutativeMonoid M where
  e = (infinity, infinity)
  (a1, a2) <> (b1, b2) = (c1, c2)
    where c1:c2:_ = 
      sort [a1, a2, b1, b2]
  \end{minted}
\end{minipage}
\begin{minipage}{0.5\textwidth}
  \centering
  \begin{minted}{haskell}
secondMinimum :: [Int] -> Int
secondMinimum = dec . agg . map enc
  where
    enc x = (x, infinity)
    agg = reduce (<>)
    dec (_, x2) = x2
  \end{minted}
 \end{minipage}

Observe that, for this monoid, there is no such $F$ (e.g.~sum, max, min, mean) for which there is a trivial choice of homomorphism from $F$ to $M$.

In principle, there exists an $F$ from which it is possible to construct a homomorphism to $M$: by \citep{Zaheer2017} and \citep{Xu2019}, for any $(M, \ast, e_\ast)$ with $M\subseteq \Q^n$, there exists a surjective monoid homomorphism $h$ from $(\R^n, +, 0)$ to $(M, \ast, e_\ast)$. But \citet{Wagstaff2019} show that this guarantee may require an $h$ that is highly discontinuous, and therefore not only hard to learn in-distribution\footnote{Suppose $f : X \to Y$ is a model trained to learn $h : X \to Y$ given a training set $\{(x_i, y_i\}_{i=1}^n \subseteq D$ for $y_i=h(x_i)$ and $D$ the support of the training distribution. Now, for some loss function $L : Y\times Y \to \R$, we say that $f$ has \emph{learned $h$ in-distribution} if $\mathbb{E}_{\mathbf{x}\sim \mathcal{D}}[L(f(x), h(x))]$ is small, and that $f$ has \emph{learned $h$ out-of-distribution} if if $\mathbb{E}_{\mathbf{x}\sim \mathcal{P}}[L(f(x), h(x))]$ is small for distributions $\mathcal{P}$ over $X \setminus D$.}, but fully misaligned with the assumptions of the universal approximation theorem. Further, as $\dom(h)=\abr{g(M)}$, we are not learning a function whose domain is a bounded set, so we have little hope of generalising out-of-distribution.  Indeed, we demonstrate in Section~\ref{subsec:2min} that all common fixed aggregators fail to learn the \nth{2}-minimum problem, both in and out of distribution.

Similarly, \citet{Cohen-Karlik2020} show that sum-aggregators as implemented in \citep{Zaheer2017} (i.e. maps into and out of the $(\R^n, +, 0)$ commutative monoid) require $\Omega(\log 2^n)$ neurons to learn the parity function over sets of size $n$. Intuitively, the crux of their proof is that \emph{the homomorphism the aggregator would have to learn from $(\R^n, +, 0)$ to the parity monoid is a periodic function with unbounded domain}. Similar arguments hold for all aggregation tasks involving modular counting.

\subsection{Fully learnable recurrent aggregators and their limitations}
\label{subsec:recurrent}

We will now take a step back from homomorphisms, and try to discover a more flexible aggregator. An emerging narrative within deep learning is that of \emph{representations as types} \citep{Olah2015}. If we view the construction of neural networks as the construction of differentiable, parameterised pure functional programs, many of the design patterns commonly used in deep learning correspond to higher-order functions commonly used in functional programming (FP). This paradigm has proven valuable in recent times, embodied by deep learning frameworks such as JAX \citep{jax2018github}.

In FP, a simple way to aggregate a multiset of elements is to represent them as a list and \emph{fold} over it:\footnote{Note that $a \to b \to b$ is an equivalent way (via \textbf{currying}) of specifying a function $a\times b \to b$.} (Snippet~\ref{snippet:fold})

\begin{minted}{haskell}
fold :: (a -> b -> b) -> b -> [a] -> b
fold f z [] = z
fold f z (x:xs) = f x (fold f z xs)
\end{minted}

And in some sense, a recurrent neural network (RNN) is simply a fold over a list, parameterised by a learnable accumulator \texttt{f} and a learnable initialisation element \texttt{z}:\footnote{Note that an RNN can also be viewed as a map to the carrier set of the monoid of \textbf{endofunctions} (i.e. functions from a set to itself -- in this case, from \texttt{b} to \texttt{b}) \emph{under composition}: see Appendix~\ref{appendix:rnn_monoids} for details.} (Snippet~\ref{snippet:rnn})

\begin{minipage}{0.5\textwidth}
  \centering
  \begin{minted}[stripall=false,stripnl=false]{haskell}
rnnCell :: Learnable 
  (Vec R h1 -> Vec R h2 -> Vec R h2)
initialState :: Learnable (Vec R h2)
  \end{minted}
\end{minipage}
\begin{minipage}{0.5\textwidth}
  \centering
  \begin{minted}[stripall=false,stripnl=false]{haskell}
rnn :: Learnable 
  ([Vec R h1] -> Vec R h2)
rnn = fold rnnCell initialState

  \end{minted}
 \end{minipage}

Hence a natural way to construct a \emph{learnable} aggregator over multisets could be to use an RNN -- a `learnable fold' -- and to somehow ensure it is permutation-invariant. 

Indeed, this approach has been used for permutation-invariant set aggregation, with \citet{Murphy2018a} enforcing permutation-invariance by design by taking the average of an RNN applied to all permutations of its input, and \citet{Cohen-Karlik2020} regularising RNNs $f$ towards permutation-invariance by adding a pairwise regularisation term $L_{swap}(\vv{x}_1, \vv{x}_2) = \br{f(f(\vv{s}, \vv{x}_1), \vv{x}_2) - f(f(\vv{s}, \vv{x}_2), \vv{x}_1)}^2$ (which we motivate through the lens of commutative monoids in Appendix~\ref{appendix:rnn_monoids}). %

Recurrent aggregators have also occasionally seen use in GNNs  \citep{hamilton2017inductive,Xu2018}, but they are scarcely used despite their competitive performance. We assume RNNs likely remain unpopular as a GNN aggregator due to their \emph{depth}. Indeed, observe that an $N$-layer GNN equipped with a recurrent aggregator has (worst-case) depth $O(VN)$. By contrast, the same GNN equipped with a fixed aggregator has (worst-case) depth $O(N)$. And as many graphs on which we want to deploy GNNs can have upwards of 100,000 nodes \citep{hu2020open},  the same problems of \emph{efficiency} and \emph{maximum dependency length} observed by \citet{Vaswani2017} when using RNNs for sequence transduction also hold when using RNNs for graph message aggregation.%

\subsection{A compromise: fully learnable commutative monoids}
\label{subsec:compromise_lcm}

So, if recurrent aggregators are too deep, is there any way to get a fully learnable aggregator? We've considered the \emph{fixed-aggregator} approach, where we learn maps into and out of the carrier set of a pre-determined commutative monoid. We've considered the \emph{recurrent-aggregator} approach, where we represent multisets as lists and implement aggregation as a \emph{learnable fold over lists}.\footnote{Alternatively, we can see this, as in Appendix~\ref{appendix:rnn_monoids}, as learning maps into and out of the carrier set of the monoid of endofunctions.} But another way to represent multisets in FP is as a \emph{balanced binary tree}, over which aggregation is implemented as a \href{https://hackage.haskell.org/package/multiset-0.3.4.3/docs/Data-MultiSet.html#v:fold}{fold parameterised by a commutative monoid}. So what if we implemented aggregation as a \emph{learnable fold over a balanced binary tree}? Or in other words, what if, instead of learning maps into and out of some commutative monoid, we simply \emph{learn the commutative monoid itself}?

Let's make precise what exactly we mean by `learning a commutative monoid' for use in a GNN. Recall that a commutative monoid $(M, \oplus, e_\oplus)$ is defined by its carrier set $M$, its binary operation $\oplus$ and its identity element $e_\oplus$. So given some learnable \emph{commutative, associative} binary operator $\oplus$ (written \mintinline{haskell}{binOp :: Learnable (Vec R h -> Vec R h -> Vec R h)}), and some learnable identity element $e_\oplus$ (written \mintinline{haskell}{identity :: Learnable (Vec R h)}), we can define a \vocab{learnable commutative monoid} over some learned embedding space (in other words, a subset of $\R^h$): (Snippet~\ref{snippet:lcm})

\begin{minted}{haskell}
type HiddenState = Vec R h
instance CommutativeMonoid HiddenState where
  e = identity; <> = binOp
\end{minted}

Thus, our aggregation function can be specified simply, as $\bigoplus_x x$, or

\begin{minted}{haskell}
aggregate :: Learnable ([HiddenState] -> HiddenState)
aggregate = reduce (<>)
\end{minted}

Note that, here, the carrier set is implicit -- when used in a GNN, we expect the message function (i.e. the producer of the elements to be aggregated) to learn a `return type' representation whose members are elements of this implicit carrier set, and similarly for the `input type' of the readout function.

Now, why do we care about this at all? Indeed, if we implement \texttt{reduce} as a \texttt{fold}, we're no better off than if we just used a recurrent aggregator. But consider the computation graph (or rather, computation \emph{binary tree}) of such an aggregation $x_1 \oplus (x_2 \oplus (x_3 \oplus x_4))$. By Tamari's theorem \citep{MR146227}, the associativity of $\oplus$ means that the result of evaluating this computation tree is \emph{invariant under rotations of nodes in the tree}. Therefore, in order to minimise the depth of the computation, we can rewrite our reduction as a balanced binary tree: $(x_1 \oplus x_2) \oplus (x_3 \oplus x_4)$ (see Appendix \ref{appendix:bin_tree_agg}). And by doing so, for $V$ elements to aggregate, we obtain a network with $O(V)$ applications of $\oplus$ and $O(\log V)$ depth -- an exponential improvement over our $O(V)$-depth recurrent aggregators.

\subsection{Commutative, associative binary operators for learnable commutative monoids}
\label{subsec:binops}

So, \emph{given a commutative, associative binary operator}, we can get our learnable commutative monoid with $O(\log V)$ depth. But how do we construct such an operator in the first place? As with permutation-invariant RNNs, we have two options: either we construct an operator that \emph{strongly enforces} the axioms of commutativity and associativity by construction, or we construct some arbitrary binary operator and \emph{weakly enforce} the axioms through regularisation.

{\bf Strong enforcement.} While some research has been conducted into learning algebraic structures with \emph{strongly enforced} axioms \citep{Abe2021, Martires2021}, these approaches reduce to \emph{learning maps into and out of a fixed aggregator}.\footnote{i.e. choosing an algebraic structure (e.g.~the Abelian group $(\R^n, +, \vv{0})$) and learning maps between the model's latent space and that structure.} We observe that, while we can strongly enforce commutativity in any binary operator $f(x,y)$ by symmetrising it to $g(x,y)=\frac{f(x,y)+f(y,x)}{2}$, we found no such construction for associativity which doesn't sacrifice expressivity.

So given this, and given the importance of \emph{gating} \citep{Tallec2018} in neural networks applied over long time horizons, we can construct a simple \emph{strongly commutative} binary aggregator (\vocab{Binary-GRU}) by symmetrising a GRU \citep{Cho2014}: (Snippet~\ref{snippet:binary_gru})

\begin{minted}{haskell}
binaryGRU :: Learnable (Vec R h -> Vec R h -> Vec R h)
binaryGRU v1 v2 = do
  g <- new gruCell (InputDim h) (HiddenDim h)
  return (g v1 v2 + g v2 v1) / 2
\end{minted}

{\bf Weak enforcement.} Alternatively, just as we saw with recurrent aggregators in Section~\ref{subsec:recurrent}, for a learnable binary operator $\oplus : \R^n \to \R^n \to \R^n$ we could \emph{weakly enforce} commutativity and associativity through regularisation losses $L_{comm}(\vv{x},\vv{y}) = \lambda_{comm} |(\vv{x} \oplus \vv{y}) - (\vv{y} \oplus \vv{x})|^2$ and $L_{assoc}(\vv{x},\vv{y},\vv{z}) = \lambda_{assoc} |(\vv{x} \oplus (\vv{y} \oplus \vv{z})) - ((\vv{x} \oplus \vv{y}) \oplus \vv{z})|^2$ (for implementation details, see Appendix~\ref{appendix:implement_reg_loss}).

Now, by applying $L_{assoc}$ to Binary-GRU, we obtain a \emph{strongly commutative}, \emph{weakly associative} binary operator (\vocab{Binary-GRU-Assoc}).\footnote{Note that we can instantiate this operator with different values of the \emph{regularisation parameter} $\lambda_{assoc}$ (hereafter referred to as $\lambda$) by which we scale the associativity loss.}

\section{Assessing the utility of learnable commutative monoids}
\label{sec:experiments}

Now, we've seen three types of aggregator: fixed aggregators, recurrent aggregators and learnable commutative monoids. In order to explore their trade-offs in terms of \emph{expressivity}, \emph{generalisation} and \emph{efficiency}, we conduct a range of experiments comparing the performance of 
\begin{itemize}
    \item \emph{state-of-the-art fixed aggregators} (such as sum-aggregation \citep{Zaheer2017}, max-aggregation \citep{Velickovic2019} and PNA \citep{Corso2020}), 
    \item \emph{recurrent aggregators} (specifically GRUs \citep{Cho2014}), and 
    \item \emph{learnable commutative monoid (LCM) aggregators} (using the Binary-GRU and Binary-GRU-Assoc learnable operators as described in Section~\ref{subsec:binops})
\end{itemize}
on the following synthetic and real-world problems:

{\bf \nth{2}-minimum.} We test fixed aggregators, recurrent aggregators and learnable commutative monoids on the problem of finding the \emph{second-smallest} element in a set of binary-encoded integers. As observed in Section~\ref{subsec:limitations_on_expressivity}, this task is a synthetic aggregation problem with an `unusual' commutative monoid, in that it doesn't align well with common fixed aggregators. Therefore, we expect this task to be a standard problem for which learnable aggregators would outperform any commonly-used fixed aggregator, especially out-of-distribution. 

{\bf PNA synthetic benchmark.} We then proceed to test the in-distribution performance of our aggregators on the synthetic dataset presented in \citep{Corso2020}. This dataset consists of aggregator-heavy, classical graph problems that are mostly aligned with the aggregators used to construct PNA. Thus, we expect PNA (and the relevant fixed aggregators) to perform strongly here, potentially even out-of-distribution. But while our learnable aggregators don't necessarily have the inductive bias to approximate these monoids well over an \emph{unbounded} domain, we expect them to perform competitively at learning the relevant monoids in-distribution.

{\bf PNA real-world benchmark.} Finally, we test our aggregators on the real-world dataset presented in \citep{Corso2020}, consisting of chemical (ZINC and MolHIV) and computer vision (CIFAR10 and MNIST) datasets from the GNN benchmarks of \citet{Dwivedi2020} and \citet{hu2020open}. In contrast to the algorithmic tasks in the synthetic benchmark, we expect these real-world problems to contain `unusual' target monoids: for both molecular and computer vision problems, it is likely that our GNN will learn complex representations whose most natural monoid is not the image of a simple homomorphism from any common fixed aggregator. Therefore, we expect fully learnable aggregators (GRU and LCMs) to outperform fixed aggregators on this benchmark.

Training details for all experiments are provided in Appendix~\ref{appendix:training_details}. Notably, for all uses of learnable aggregators, we randomly shuffle each batch of sequences before feeding it to the aggregator as a form of \emph{regularisation through data augmentation}.

\subsection{\nth{2}-minimum}
\label{subsec:2min}

For this experiment, we compared fixed (sum, max, PNA), recurrent (GRU) and LCM (Binary-GRU) aggregators on the synthetic \vocab{\nth{2}-minimum} set aggregation problem. In order to evaluate the effects of \emph{regularisation towards algebraic axioms} on the performance of LCM aggregators, we also tested Binary-GRU-Assoc, sweeping over values of the regularisation parameter $\lambda$ from $10^0$ to $10^{-7}$.

\subsubsection{Experimental details}

For \emph{training data}, we used 65,536 multisets of integers $\sim U(0, 255)$ of size $\sim U(1, 16)$. For \emph{validation data}, we used 1,024 multisets of integers $\sim U(0, 255)$ of size 32. For \emph{evaluation data}, we used 1,024 multisets of integers $\sim U(0, 255)$ of size $l$, for $l \in [1, 200]$. We used a standard multiset-aggregation architecture $f(\vv{X}) := \sigma \br*{\psi\br*{\bigoplus_{\vv{x}\in \vv{X}} \phi(\vv{x})}}$ for $\oplus$ the aggregator being tested, and $\phi$ and $\psi$ MLPs. $f$ takes as input a vector of 8-bit binary-encoded integers (as in \citep{Yan2020}), and returns a binary-encoded integer in $[0,1]^8$. The full architecture (with details on integer embedding) is outlined in Appendix~\ref{appendix:second_min_arch}.

\subsubsection{Results and discussion}
{\bf Summary.} Recall that this problem was chosen for its comparatively unusual commutative monoid, which we do not expect aligns well with fixed aggregators. Indeed, we confirm this hypothesis: we see in Figure~\ref{fig:2min_1k_baselines} that \emph{fixed aggregators} fail to learn \nth{2}-minimum in-distribution, that \emph{recurrent aggregators} learn \nth{2}-minimum near-perfectly in-distribution, generalising well out-of-distribution, and that \emph{LCM aggregators} learn \nth{2}-minimum near-perfectly in-distribution and are competitive with recurrent aggregators out-of-distribution, while achieving an exponential speedup over recurrent aggregators on large sets. Furthermore, we observe that \emph{regularising towards algebraic axioms} improves the performance of LCM aggregators both in and out of distribution.

\begin{figure}[!h]
	\centering
	
        \advance\leftskip-0.5cm
        \includegraphics[width=0.9\linewidth]{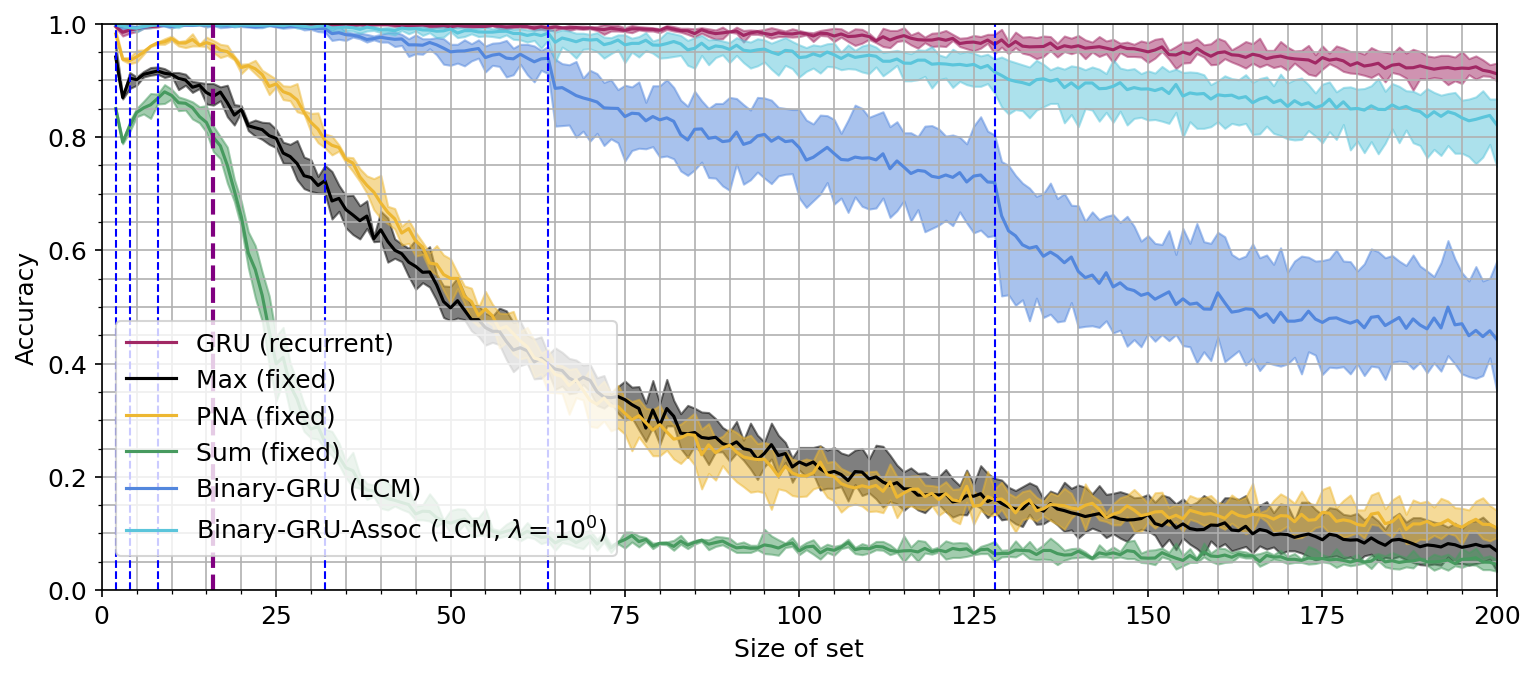}
	\caption{Generalisation performance for fixed (max, sum, PNA), recurrent (GRU) and LCM (Binary-GRU) aggregators, along with the best-performing regularised LCM aggregator (Binary-GRU-Assoc with $\lambda=10^0$). The shaded region is bounded above and below by the maximum and minimum values across all runs. The vertical purple line denotes the maximum set size present in training data (16); the vertical blue lines denote powers of 2 (from $2^1$ to $2^7$). For detailed results, see Appendix~\ref{appendix:2min_detailed_results}.}
	\label{fig:2min_1k_baselines}
\end{figure}

{\bf In-distribution performance.} Examining Figure~\ref{fig:2min_1k_baselines}, observe that only the fully-learnable aggregators -- GRU, Binary-GRU and Binary-GRU-Assoc -- managed to learn \nth{2}-minimum near-perfectly in-distribution, with the next best performing aggregator being PNA.\footnote{Note that, out of the fixed aggregators, PNA was the only one to achieve near-perfect accuracy on the \emph{training} dataset, with a maximum training accuracy of around $0.997$.}

{\bf Out-of-distribution performance (without regularisation).} Observe that, out-of-distribution, all learnable aggregators generalise near-perfectly up to size 32 (twice the size of the input). Beyond this point, while the performance of the recurrent aggregator decays slowly (reaching $0.912 \pm 0.017$ at size 200), the performance of the LCM quickly drops (reaching $0.287 \pm 0.068$ at size 200). Despite this, both learnable aggregators consistently outperform the fixed aggregators out-of-distribution. Furthermore, out of the fixed aggregators, we see that the sum-aggregator's performance plateaus extremely quickly, a result we may attribute to \emph{the domain of the learned homomorphism from the sum-aggregator being an unbounded set} (see Section~\ref{subsec:limitations_on_expressivity}).

{\bf Efficiency.} As hypothesised in Section~\ref{subsec:compromise_lcm}, we see (in Appendix \ref{appendix:2min_detailed_results}, Figure~\ref{fig:2min_speedtest}) that LCMs are indeed exponentially faster than RNNs for large sets: for $n=20$, Binary-GRU-Assoc takes $48.2 \pm 0.4$ seconds per epoch, and GRU takes $46.6 \pm 0.5$ seconds per epoch, while for $n=200$, Binary-GRU-Assoc takes $79.4 \pm 0.5$ seconds per epoch, and GRU takes $397.2 \pm 1.3$ seconds per epoch. %

{\bf Regularisation towards associativity.} We show the results from the best-performing regularised LCM aggregator ($\lambda=10^0$) in Figure~\ref{fig:2min_1k_baselines} and Table~\ref{tab:2min_1k_baselines}. Although the unregularised Binary-GRU performs better than all fixed aggregators, observe that the regularised Binary-GRU-Assoc outperforms its unregularised sibling both in and out of distribution, and achieves generalisation performance competitive with GRU. Furthermore, observe that the sudden performance drops experienced by Binary-GRU when the size of the set reaches a power of two (i.e. when the depth of the aggregation tree increases) are noticeably dampened for Binary-GRU-Assoc, suggesting that regularisation towards associativity helps prevent overfitting to a particular maximum aggregation tree height. For interest, we present the full results of the regularisation parameter sweep in Figure~\ref{fig:2min_regularisation} in Appendix~\ref{appendix:2min_detailed_results}.

\subsection{PNA synthetic benchmark}

For this experiment, we trained recurrent (GRU) and LCM (Binary-GRU, Binary-GRU-Assoc) aggregators on the synthetic benchmark from \citep{Corso2020}, comparing against the fixed-aggregator baselines presented there (for GATs \citep{velickovic2018graph}, GCNs \citep{kipf2017semisupervised}, GINs \citep{xu2018how} and MPNNs \citep{Gilmer2017} with sum and max aggregators).

\subsubsection{Experimental details}

In the PNA paper \citep{Corso2020}, experiments testing fixed aggregators (sum, max, PNA) are conducted on a custom GNN architecture centred around an MPNN layer with dimension 16, split into four towers each with hidden dimension 4. As we hypothesise that the low dimensionality of these towers could harm the expressivity of learnable aggregators, we test our learnable aggregators both in MPNNs of hidden dimension 16, with four towers of hidden dimension 16, and in MPNNs of hidden dimension 128, with one tower of hidden dimension 128.

\subsubsection{Results and discussion}

{\bf Summary.} Recall that this dataset consists of aggregator-heavy classical graph problems\footnote{three node-based algorithmic tasks (single-source shortest paths, eccentricity and computing the Laplacian of node feature vectors) and three graph-based algorithmic tasks (connectedness, diameter and spectral radius)} that are mostly aligned with the aggregators used to construct PNA. So, as expected, we see in Table~\ref{tab:pna_table} that PNA outperforms all other aggregators tested on the dataset in-distribution. But observe that, on these problems, \emph{our asymptotically more efficient LCMs are competitive with and sometimes beat GRUs} -- and indeed, on the node-based problems in the dataset, our LCMs are as strong as PNA.

In Appendix~\ref{appendix:2min_detailed_results}, we observe the surprising result that LCMs are more stable than PNA out-of-distribution (OOD), and that regularising LCMs towards associativity improves OOD performance at the cost of impairing performance in-distribution. We also discuss the \emph{effects of increasing dimensionality} on fixed aggregator performance, through the lens of commutative monoid homomorphisms.

{\bf In-distribution performance.} Observe in Table~\ref{tab:pna_table} that, while PNA beats all other aggregators tested, our learnable aggregators perform competitively in-distribution, with all learnable aggregators (GRU, Binary-GRU and Binary-GRU-Assoc) beating all single-aggregator (i.e. non-PNA) architectures. Interestingly, our Binary-GRUs perform better than the corresponding GRUs: perhaps their \emph{inductive bias towards commutativity} helps us learn in-distribution. 

\begin{table}[!h]
    \centering
    \scalebox{0.68}{
    \begin{tabular}{lccccccc}
         \toprule
          \multicolumn{2}{c}{} & \multicolumn{3}{c}{\bf Node tasks} & \multicolumn{3}{c}{\bf Graph tasks} \\
         {\bf Model}                &  {\bf Avg score} & SSSP & Ecc & Lap feat & Conn & Diam & Spec rad \\ 
         \midrule %
         GCN                  & -2.05 & {-2.16}  &	-1.89 &	 	-1.60 & -1.69 & -2.14 & -2.79 \\
         GAT            	  &  -2.26 &  -2.34 & -2.09 &	 	-1.60 & -2.44 & -2.40 &	-2.70	\\ %
         GIN & -1.99 & -2.00 & -1.90 & -1.60 & -1.61 & -2.17 & -2.66 \\
         MPNN (sum)                 & -2.50 & -2.33 & -2.26 & -2.37 & -1.82 & -2.69 & -3.52 \\ %
         MPNN (max) & -2.53 & -2.36 &	-2.16 &	-2.59 & -2.54 & -2.67 & -2.87  \\ \midrule %
         PNA-16     & -3.04 & {\bf -2.99} & -2.81 & -2.83 & {\bf -2.91} & -2.98 & {\bf -3.71} \\ %
         PNA-128 & {\bf -3.09} & -2.94 & {\bf -2.88} & -3.82 & -2.42 &	{\bf -3.00} & -3.48 \\ \midrule
         GRU & {-2.91} & -2.84 & -2.71 & -3.73 & -2.20 & -2.88 & -3.11  \\ %
         Binary-GRU
          & 
          {-3.00} & -2.85 & -2.77 & \textbf{-3.87} & -2.34 & -2.88 & -3.29  \\ 
          Binary-GRU-Assoc & -2.95 & {\bf -2.99} & {\bf -2.88} & -2.92 & -2.62 & -2.92 & -3.37 \\
          \bottomrule
    \end{tabular}
    }
    \caption{Mean $\log_{10}(MSE)$ on the PNA test dataset}
    \label{tab:pna_table}
\end{table}

{\bf Per-task performance.} We present the per-task performance of all 128-dimensional aggregators (together with fixed-aggregator baselines) in Table~\ref{tab:pna_table}. Observe that, in fact,  Binary-GRU-Assoc outperforms Binary-GRU in all tasks apart from the the graph Laplacian.  

Furthermore, while learnable aggregators do not perform as strongly as fixed aggregators on whole-graph tasks, they perform as well as or better than fixed aggregators for node-based tasks. This may be because the benchmark implementation for whole-graph tasks uses a \emph{sum-aggregator} over the readout values: it is likely difficult to learn a homomorphism from the sum aggregator to the complex latent-space monoid learned by the LCM, and perhaps fixed aggregators provide an inductive bias towards learning representations for which it is easier to map to and from the sum-aggregation monoid.

\subsection{PNA real-world benchmark}

For this experiment, we trained recurrent (GRU) and LCM (Binary-GRU) aggregators on the real-world benchmark from \citet{Corso2020}, containing two molecular graph property prediction datasets (ZINC and MolHIV) and two superpixel graph classification datasets (CIFAR10 and MNIST). Note that, due to limitations on compute resources, we were not able to perform a regularisation parameter sweep to test Binary-GRU-Assoc. 
The GNN architecture used here is identical to that in \citep{Corso2020}, except that, for learnable aggregators, all MPNN towers have the same dimensionality as the MPNN itself (i.e. we do not \emph{divide} the towers).

\subsubsection{Results and discussion}

{\bf Summary.} Recall that the real-world benchmark has complex problems that do not necessarily align with common fixed aggregators. We observe in Figure~\ref{fig:pna_realworld} that, while PNA in general outperforms all other aggregators on property prediction problems over small molecular graphs, the more expressive GRU substantially outperforms PNA for the (more discrete) task of image classification. Also, note that the (asymptotically efficient) Binary-GRU LCM provides a good trade-off between these two aggregators, being the \emph{second-best aggregator} for all but two problems. Finally, we see that learnable aggregators appear particularly powerful on problems involving \emph{graphs with edge features}. 

\begin{figure}[!h]
	\centering
        \includegraphics[width=0.95\linewidth]{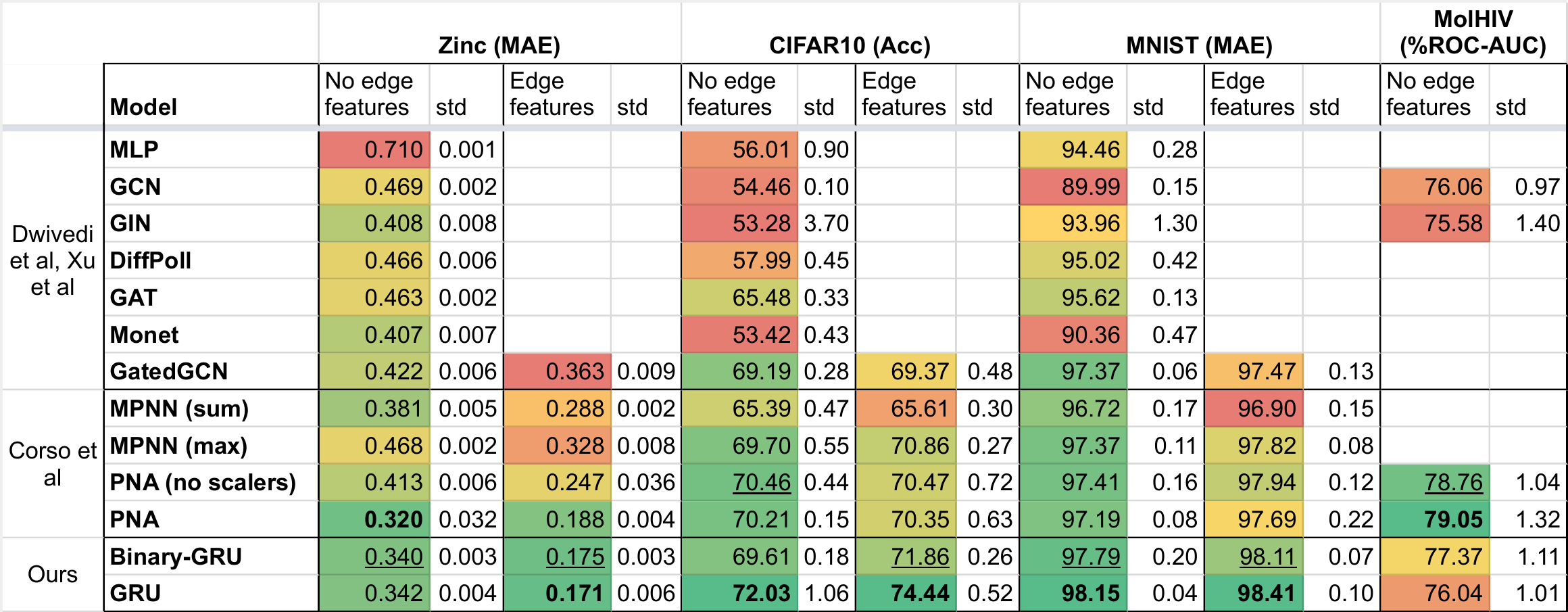}
	\caption{Results of learnable aggregators on the PNA real-world dataset, in comparison with those analysed by \citet{Corso2020}. Best results in bold-face, second-best in underline.}
	\label{fig:pna_realworld}
\end{figure}

{\bf Molecular datasets.} Observe that PNA is the strongest aggregator over both the ZINC dataset without edge features and the HIV dataset -- indeed, due to the continuous nature of the properties we want to estimate in these datasets, it seems likely that the `natural' monoids for aggregation over graphs in these datasets would align well with fixed aggregators. 

{\bf Image datasets.} By contrast, we observe that GRU-aggregators are the strongest when testing on image data, likely as their expressivity lets them easily learn a complex, perhaps more discrete aggregation function. And while Binary-GRU does not do quite as well as GRU here, in all but one case it outperforms PNA on this problem. 

{\bf Edge features.} Finally, observe that, if we add edge features to ZINC, GRU outperforms PNA -- and comparing results on the CIFAR-10 dataset with and without edge features, the average accuracy improvement for fixed aggregators when adding edge features is 0.34\%, whereas the equivalent improvement for learnable aggregators is 2.33\%. Learnable aggregators may be particularly strong on tasks with \emph{edge features}, as making full use of them tends to require the learning of a more complex aggregation function.

\section{Conclusions}

In this work we have conducted a thorough study of aggregation functions within graph neural networks (GNNs), demonstrating both theoretically and empirically that many tasks of practical interest rely on a nontrivial integration of neighbourhoods (i.e. a nontrivial \emph{commutative monoid}). This motivates the use of fully-learnable aggregation functions, but prior proposals based on RNNs had several shortcomings in terms of efficiency. Accordingly, we propose learnable commutative monoid (LCM) aggregators, which trade off the flexibility of RNNs with efficiency of fixed aggregators, producing a simple, yet empirically powerful, GNN aggregator with only $O(\log V)$ depth.

{\bf Implications for GNN practitioners.} Based on our results, we present some suggestions to those using GNNs in practice: %

\begin{itemize}
    \item When \textbf{choosing a fixed aggregator $F$} for a GNN architecture, consider the type of aggregation your problem is likely to involve -- if it can be framed as a commutative monoid $M$, is it likely that a homomorphism can be learned from $F$ to $M$?
    \item For \textbf{graph problems manipulating discrete data}, or problems for which the aggregation required \emph{doesn't align} with existing fixed aggregators, learnable aggregators may improve performance (especially out-of-distribution).
    \item When \textbf{choosing a learnable aggregator}, for problems over small graphs, recurrent aggregators will likely perform well -- but if they prove too slow, you may wish to try a learnable commutative monoid aggregator.
    \item And \textbf{if your learnable aggregator is overfitting}, perhaps try \emph{regularising} it towards the relevant axioms (e.g. invariance under pairwise swaps for recurrent aggregators, commutativity and associativity for learnable commutative monoids).
\end{itemize}
\clearpage
\section*{Author Contributions}

The research idea of exploring learnable commutative monoid (LCM) aggregators was originated by Euan Ong and steered by Petar Veli{\v c}kovi{\'c}. The experimental pipeline was developed and experiments were conducted by Euan, with oversight, mentorship and management from Petar. The formal analysis of fixed aggregators and its framing in terms of functional programming were originated by Euan, with advice from Petar. Both authors contributed to writing the paper and responding to reviewer feedback.

\section*{Acknowledgements}

We would like to thank Pietro Li{\`{o}} and Malcolm Scott for generously providing access to compute resources at short notice, without which this work would not be possible. We also thank Andrew Dudzik and Karl Tuyls for reviewing the paper prior to submission, and all our anonymous reviewers for their careful feedback.

\bibliographystyle{unsrtnat}
\bibliography{reference}

\appendix
\section{Proof of Proposition~\ref{thm:hom_monoids}}
\label{appendix:proof_hom_monoids}
\hommonoids*
\begin{proof}
    We proceed by cases.
    \begin{itemize}
        \item[$(\to)$] Suppose $\bigast_{x\in X} x = h\br*{\bigoplus_{x\in X} g(x)}$ for all finite multisets $X$ of $M$. 
        
        When $X=\cbr{x}$, have that $h(g(x))=x$ trivially, so $h$ must be a left inverse of $g$ (and is therefore surjective).

        Now for $x, y \in \abr{g(M)}$, we want to show that $h(x \oplus y) = h(x) \ast h(y)$ and that $h(e_\oplus) = e_\ast$.

        To show the former, observe that $x=\bigoplus_{a\in A} g(a)$ and $y=\bigoplus_{b \in B} g(b)$ for some finite multisets $A, B$ of $M$.

        Now have that
        \begin{align*}
            h(x \oplus y) &= h\br*{\br*{\bigoplus_{a\in A} g(a)} \oplus \br*{\bigoplus_{b\in B} g(b)}} \\
            &= h\br*{\bigoplus_{x\in A\uplus B} g(x)} \\
            &= \bigast_{x\in A\uplus B} x \\
            &= \br*{\bigast_{a\in A} a} \ast \br*{\bigast_{b\in B} b} \\
            &= h\br*{\bigoplus_{a\in A} g(a)} \ast h\br*{\bigoplus_{b\in B} g(b)} \\
            &= h(x) \ast h(y)
        \end{align*}

        as desired.

        To show the latter, observe that $h(e_\oplus) \ast h(f) = h(e_\oplus \oplus f) = h(f)$ for all $f\in F$. As $h$ is surjective, we have that $h(F) = M$, so $h(e_\oplus) \ast m = m\ast h(e_\oplus) = m$ for all $m\in M$, and $h(e_\oplus) = e_\ast$.

        \item [$(\leftarrow)$] Suppose $h$ is a left inverse of $g$ and a surjective monoid homomorphism from $\abr{g(M)}$ to $M$. Then 
        \begin{align*}
            h\br*{\bigoplus_{x\in X} g(x)} &= h\br*{\bigoplus_{i=1}^n g(x_i)} \\
            &= h\br*{f(x_1) \oplus \bigoplus_{i=2}^n g(x_i)} \\
            &= h(g(x_1)) \ast h\br*{\bigoplus_{i=2}^n g(x_i)} \\
            &= x_1 \ast h\br*{\bigoplus_{i=2}^n g(x_i)} \\
            &= ... \\
            &= \bigast_{i=1}^n x_i \\
            &= \bigast_{x\in X} x
        \end{align*}
        as desired.
    \end{itemize}
\end{proof}

\section{Motivating the conditions for permutation-invariance in RNNs}
\label{appendix:rnn_monoids}

An alternative way to motivate the regularisation loss of \citet{Cohen-Karlik2020}, through the lens of monoids, is to frame the recurrent aggregator as a monoid, and identify the conditions required for this monoid to be commutative.

Keeping in mind that `RNNs are just learnable folds', we notice that \emph{endofunctions form a monoid under composition}:

\begin{minted}{haskell}
instance Monoid (a -> a) where
  e = id
  <> = (.)
\end{minted}

and observing that, for instance,

\begin{minted}{haskell}
  fold f z [x1, x2, x3] 
= f x1 (f x2 (f x3 z))
= (f x1 . f x2 . f x3) z
= ($ z) (f x1 . f x2 . f x3)
= ($ z) (reduce (.) (map f [x1; x2; x3]))
\end{minted}

we can rewrite \texttt{fold} as an aggregation over the composition monoid:

\begin{minted}{haskell}
fold :: (a -> b -> b) -> b -> [a] -> b
fold f z = dec . reduce (.) . map enc
  where
    enc x = f x
    dec f = f z
\end{minted}

Now, applying this to our recurrent aggregator, we have

\begin{minted}{haskell}
rnn :: Learnable ([Vec R h1] -> Vec R h2)
rnn = dec . reduce (.) . map enc
  where
    enc x = rnnCell x
    dec f = f initialState
\end{minted}

Observe that, for \texttt{rnn}, the carrier set of the composition (sub)monoid consists of functions \texttt{rnnCell x} for inputs \texttt{x} to the aggregation function. So, in order to enforce that this monoid is commutative, we must simply ensure that

\begin{minted}{haskell}
   f <> g = g <> f
=> (rnnCell x1) . (rnnCell x2) = (rnnCell x2) . (rnnCell x1)
=> rnnCell x1 (rnnCell x2 h) = rnnCell x2 (rnnCell x1 h)
\end{minted}

for all inputs \texttt{x1}, \texttt{x2} and all hidden states \texttt{h}.

\section{Architecture used for \nth{2}-minimum benchmark}
\label{appendix:second_min_arch}

We present Haskell pseudocode for the architecture used in the \nth{2}-minimum benchmark below.

\begin{minted}{haskell}
h = 128

ofMlp :: Learnable (Vec R h -> Vec R h)
ofMlp = do 
  dense <- new ofLinearLayer (In h) (Out h)
  return gelu . dense

intEmbedding :: Learnable (Vec Bool 8 -> Vec R h)
intEmbedding = toLearnable $ \int -> do
  one_vecs <- newList (Length 8) (Of (learnableParameter (Dim h)))
  zero_vecs <- newList (Length 8) (Of (learnableParameter (Dim h)))
  return
    [ one*i + zero*(1-i) 
    | (i, one, zero) <- zip3 int oneVecs zeroVecs]

enc :: Learnable (Vec Bool 8 -> Vec R h)
enc = do
  mlp <- new ofMlp
  return mlp . intEmbedding

agg :: Learnable ([Vec R h] -> Vec R h)
-- Implementation-dependent

dec :: Learnable (Vec R h -> Vec R 8)
dec = do
  mlp <- new ofMlp
  dense <- new ofLinearLayer (In h) (Out h)
  return sigmoid . dense . mlp

net :: Learnable ([Vec Bool 8] -> Vec R 8)
net = dec . agg . map enc
\end{minted}

\section{Implementing binary tree aggregation for learnable commutative monoids}
\label{appendix:bin_tree_agg}
More precisely, given a learnable commutative monoid operator \texttt{<>} and a function \texttt{toBalancedTree} which takes a list of elements and returns a balanced \texttt{Tree} whose leaves contain these elements, we aggregate in the following way:

\begin{minted}{haskell}
data Tree a = Lf a | Nd Tree Tree
toBalancedTree :: [a] -> Tree a

fold :: (a -> a -> a) -> Tree a -> a
fold f = \case
  Nd l r -> f (fold f l) (fold f r)
  Lf m -> m

aggregate :: Learnable ([LearnableMonoid] -> LearnableMonoid)
aggregate = fold (<>) . toBalancedTree
\end{minted}

\section{Implementing regularisation losses for learnable commutative monoids}
\label{appendix:implement_reg_loss}
Observe that, for any learnable binary operator

\begin{minted}{haskell}
(<>) :: Learnable (Vec R h -> Vec R h -> Vec R h)
\end{minted}

aggregating over a tree of messages (of type \texttt{Tree (Vec R h)}), we can construct regularisation losses that penalise the operator for violating commutativity and associativity each time it is applied:

\begin{minted}{haskell}
-- Computes getLossesAtNode at every node in the tree,
-- returning a list of the results.
accumLosses :: ((Tree (Vec R h)) -> [R]) -> (Tree (Vec R h)) -> [R]
accumLosses getLossesAtNode = \case
  Nd a b -> 
    getLossesAtNode (Nd a b) : 
      (accumLosses getLossesAtNode a ++ accumLosses getLossesAtNode b)
  Lf -> _

commLoss :: (Tree (Vec R h)) -> R
commLoss = mean . accumLosses getLossesAtNode
  where getLossesAtNode = \case
    Nd a b -> [|(a <> b) - (b <> a)|**2]
    Lf -> []

assocLoss :: (Tree (Vec R h)) -> R
assocLoss = mean . accumLosses getLossesAtNode
  where
    loss a b c = |((a <> b) <> c) - (a <> (b <> c))|**2
    getLossesAtNode = \case
      Nd (Nd a b) (Nd c d) -> 
        [loss (aggregate a) (aggregate b) (aggregate c), 
         loss (aggregate b) (aggregate c) (aggregate d)]
      Nd (Nd a b) (Lf c)   -> 
        [loss (aggregate a) (aggregate b) c]
      _ -> []

aggregateWithLoss :: Learnable ([LearnableMonoid] -> LearnableMonoid)
aggregateWithLoss xs = aggregate tree
  with extraLosses = [commLoss tree, assocLoss tree]
  where tree = toBalancedTree xs
\end{minted}

\section{Training details for experiments}
\label{appendix:training_details}

On every experiment, for each model, we performed 3 training runs with different seeds; for each run we used a validation set to choose the highest-performing checkpoint for evaluation.

\paragraph{\nth{2}-minimum} We trained each aggregator with the Adam optimiser for 1,000 epochs, with batch size 32 and learning rate $1e-4$.

\paragraph{PNA synthetic benchmark} We trained each aggregator for 1,000 epochs. To ensure convergence, 16-dimensional models were trained with a learning rate of $10^{-3}$ as in \citet{Corso2020}, and 128-dimensional models were trained with a learning rate of $10^{-4}$. All other hyperparameters were as in \citet{Corso2020}.

\paragraph{PNA real-world benchmark} All hyperparameters (including training time) are as in \citep{Corso2020}.

\section{Detailed results for the \nth{2}-minimum benchmark}
\label{appendix:2min_detailed_results}
We present more detailed results for the \nth{2}-minimum benchmark below:
\begin{itemize}
    \item Table~\ref{tab:2min_1k_baselines} contains in-distribution and out-of-distribution results for all aggregators tested.
    \item Figure~\ref{fig:2min_speedtest} presents network efficiency against set size for all aggregators tested.
    \item Figure~\ref{fig:2min_regularisation} presents the full results of the regularisation parameter sweep for Binary-GRU-Assoc.
\end{itemize}

As a side note, when training the non-regularised Binary-GRU aggregators, we observed that while associativity regularisation loss increased initially, it started \emph{decreasing} as the GNN's training accuracy began to plateau. This potentially hints at the model's learning trajectory: one might hypothesise that the point at which the loss decreases is the point at which the model shifts from memorisation to learning a parsimonious algorithm that generalises. 

\begin{table}[!h]
	\centering
	\begin{tabular}{crccc}
		\toprule
		\multirow{2.5}{*}{Type} & \multirow{2.5}{*}{Aggregator} & \multicolumn{1}{c}{ID accuracy} & \multicolumn{2}{c}{OOD accuracy}  \\
		\cmidrule(lr){3-3} \cmidrule(lr){4-5}
	  && \(n\in [1,16]\) & \(n=32\) & \(n=200\) \\
		\midrule
        Recurrent & GRU              & $\mathbf{0.996\pm 0.001}$ & $\mathbf{0.998\pm 0.001}$ & $\mathbf{0.912\pm 0.017}$ \\
		LCM & Binary-GRU-Assoc & $\mathbf{0.997\pm 0.002}$ & $\mathbf{0.997\pm 0.002}$ & $0.822\pm 0.064$ \\
		LCM & Binary-GRU       & $\mathbf{0.997\pm 0.001}$ & $\mathbf{0.992\pm 0.005}$ & $0.443\pm 0.122$ \\
        Fixed & PNA              & $0.961\pm 0.003$ & $0.794\pm 0.012$ & $0.110\pm 0.027$ \\
        Fixed & Max              & $0.901\pm 0.007$ & $0.723\pm 0.025$ & $0.069\pm 0.039$ \\
        Fixed & Sum              & $0.845\pm 0.010$ & $0.261\pm 0.020$ & $0.045\pm 0.011$ \\
		\bottomrule
	\end{tabular}
    \caption{Accuracy (the fraction of multisets at each size for which the \nth{2}-minimum is correctly identified) for fixed, recurrent and LCM aggregators, along with the best-performing regularised LCM aggregator (Binary-GRU-Assoc with $\lambda=10^0$).}
    \label{tab:2min_1k_baselines}
\end{table}

\begin{figure}[!h]
	\centering
	\includegraphics[width=\linewidth]{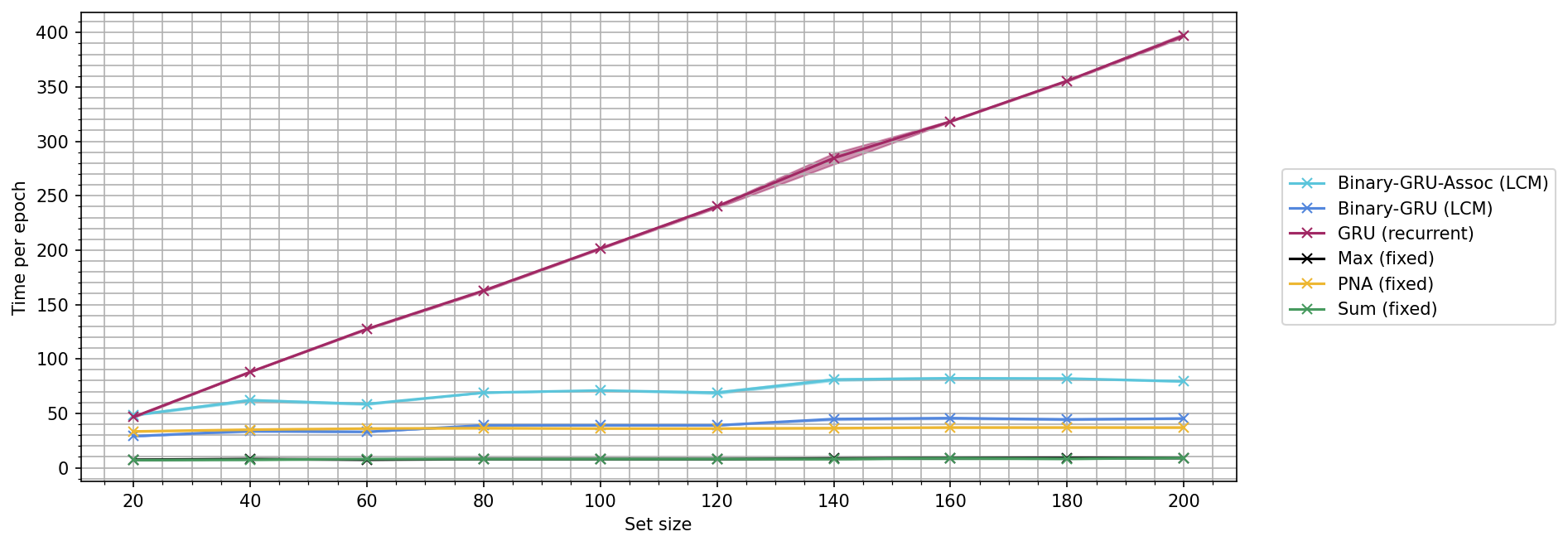}
	\caption{Efficiency (mean time per epoch on a GPU, over 5 epochs) for fixed (max, sum, PNA), recurrent (GRU), LCM (Binary-GRU) and regularised LCM (Binary-GRU-Assoc) aggregators. The shaded region is bounded above and below by the maximum and minimum values across all runs.}
	\label{fig:2min_speedtest}
\end{figure}

\begin{figure}[!h]
	\centering
	\includegraphics[width=\linewidth]{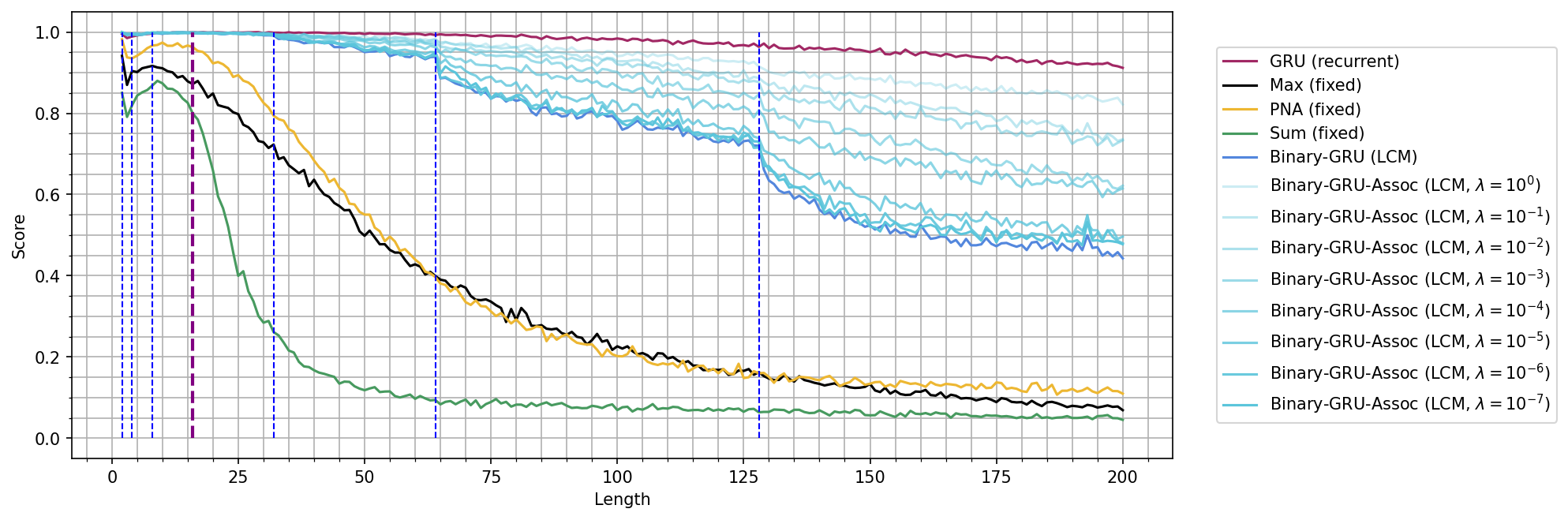}
	\caption{Mean generalisation performance for fixed, recurrent and LCM aggregators, sweeping across regularisation rate $\lambda$ for Binary-GRU-Assoc.}
	\label{fig:2min_regularisation}
\end{figure}

\section{Detailed results and discussion for the PNA synthetic benchmark}
\label{appendix:synthetic_detailed_results}

\begin{figure}[!h]
	\centering
	\includegraphics[width=\linewidth]{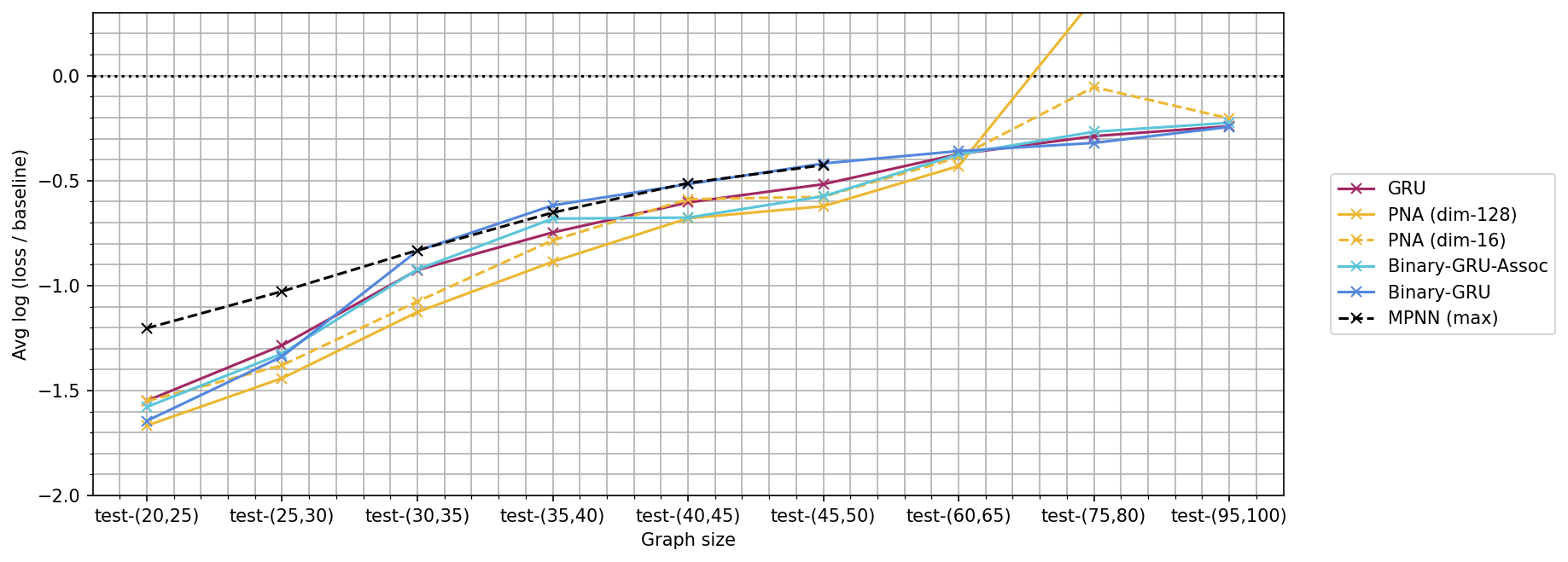}
	\caption{Mean generalisation performance (multi-task $\log_{10}$ of the ratio between the MSE loss for the GNN and the MSE loss for the baseline) for fixed, recurrent and LCM aggregators on the PNA multi-task benchmark.}
	\label{fig:pna_extrapolation}
\end{figure}

\paragraph{Out-of-distribution performance}

We present the \emph{out-of-distribution} performance of our aggregators in Figure~\ref{fig:pna_extrapolation}. Note that the MPNN (max) curve corresponds to the second-best aggregator tested out-of-distribution in \citep{Corso2020}, after PNA -- this curve stops at graphs of sizes between 45 and 50 as this is the maximum graph size on which the aggregator was tested in the paper.

Observe that all learnable aggregators generalise as well as, or better than, the max-aggregator. Notably, while the Binary-GRU-Assoc aggregator underperforms in-distribution compared to Binary-GRU, it beats Binary-GRU out-of-distribution and performs competitively with GRU: indeed, the \emph{regularisation towards associativity} has improved performance out-of-distribution at the cost of a slight decrease in performance in-distribution.

Notice also that all learnable aggregators are \emph{more stable} than PNA for very large graphs -- in fact, the 128-dimensional PNA \emph{explodes} for graph sizes above 75.

\paragraph{Dimensionality and overfitting}
Finally, we take a look at the \emph{effects of high dimensionality} on the performance of various aggregators. 

For \textbf{learnable aggregators}, increasing dimensionality seems to help performance. We demonstrated that, if learnable aggregators operate over a latent space with a high enough dimension, they can beat individual fixed aggregators on tasks the fixed aggregators should be aligned to, and can even be competitive with PNA. Informal testing showed that the performance of learnable aggregators drops substantially if the dimensionality of these aggregators is reduced. 

By contrast, for \textbf{fixed aggregators}, increasing dimensionality seems to harm performance: \citet{Corso2020} found that ``even when [models with fixed aggregators] are given 30\% more parameters than the [model using] PNA, they are qualitatively less capable of capturing the graph structure''. (And for this reason, we did not test models with fixed aggregators in the 128-dimensional setting.)

For \textbf{PNA}, the story is slightly more complex: while the 16-dimensional PNA performs well in-distribution (and, to some extent, out-of-distribution), this improvement in performance is small, especially when compared to PNA's standard deviation. And notably, unlike the 16-dimensional PNA, the 128-dimensional PNA \emph{explodes} out-of-distribution.

So it seems that, \emph{when increasing the dimensionality of the aggregator, fixed aggregators may have more of a tendency to overfit}.

One possible hypothesis for this phenomenon comes from observing that, by Section~\ref{subsec:limitations_on_expressivity},
\begin{itemize}
    \item in cases where the problem we're attempting to solve aligns with the fixed aggregator we want to use, we can often learn a simple homomorphism from the fixed aggregator to our latent space, and
    \item while homomorphisms from fixed aggregators are expressive enough in principle to model any commutative monoid, the required homomorphism is complex and doesn't generalise out-of-distribution.
\end{itemize}

Note that, even for choices of fixed aggregator where some tasks align with the underlying monoid, the aggregator still doesn't align perfectly with the \textbf{combined `multitask benchmark monoid'} that we would need to learn to imitate in order to perform all tasks simultaneously. So, if we have the dimensionality to do so, our fixed aggregator may try to combine the existing monoids to approximate this multitask monoid in-distribution, in a way that does not generalise. In other words, it may be easier to get better performance by learning a very complex homomorphism from our fixed aggregator that works well in-distribution but struggles to extrapolate, than by learning a simple homomorphism from the fixed aggregator that `mostly works'.

Under this hypothesis, \emph{low-dimensional feature spaces provide an inductive bias towards learning simple homomorphisms that generalise out-of-distribution}.

\clearpage

\section{Reference table for code snippets}
\label{appendix:py_haskell}

Throughout this work, we present code snippets in Haskell \citep{marlow2010haskell}, a statically typed, purely functional programming language.

As the fundamental idea behind this work -- using \emph{algebraic structures} as a means of abstraction in software development -- was popularised by Haskell and its surrounding community, we observe that the ideas presented in this paper are most concisely stated through the lens of Haskell.

Furthermore, in the spirit of \citet{Olah2015}, we observe that there is a very close correspondence between the construction of neural networks and the construction of purely functional programs: indeed, we believe that strongly typed, purely functional languages like Haskell offer great potential for safe, succinct specification and training of neural networks.

For those unfamiliar with Haskell, we present the Haskell snippets featured in the main body of this work, alongside roughly equivalent implementations in Python.

\def\haskellwidth{0.5\textwidth}
\def\pythonwidth{0.75\textwidth}

\begin{snippet}[h!]
    \centering
    \begin{adjustbox}{center}
    \begin{tabular}{p{\haskellwidth}p{\pythonwidth}}
        \toprule
        Haskell & Python \\
        \midrule
\begin{minted}{haskell}
class CommutativeMonoid a =
  e :: a
  <> :: a -> a -> a

{- 
where commutative monoids M satisfy
  x <> e == e
  x <> y == y <> x
  x <> (y <> z) == (x <> y) <> z 
-}
\end{minted}
        & 
\begin{minted}{python}
class CommutativeMonoid(Protocol, Generic[A]):
    @staticmethod
    def id() -> A:
        ...

    @staticmethod
    def plus(a: A, b: A) -> A:
        ...

    @classmethod
    def reduce(cls, xs: List[A]) -> A:
        accumulator = cls.id()
        for x in xs:
            accumulator = cls.plus(accumulator, x)
        return accumulator

"""
where commutative monoids M satisfy
  M.plus(x, M.id()) == x
  M.plus(x, y) == M.plus(y, x)
  M.plus(x, M.plus(y, z)) == M.plus(M.plus(x, y), z)
"""
\end{minted}
        \\
        \bottomrule
    \end{tabular}
    \end{adjustbox}
    \caption{\emph{Defining the interface for commutative monoids.} In Haskell, we do this by specifying a typeclass, such that a commutative monoid over some type \texttt{T} is defined by giving an instance of the typeclass for type \texttt{T}. In Python, we do this by defining an abstract class, such that a commutative monoid over some type \texttt{T} is defined by specifying a child class of \texttt{CommutativeMonoid[T]}.}
    \label{snippet:mon_intf}
\end{snippet}

\begin{snippet}[h!]
    \centering
    \begin{adjustbox}{center}
    \begin{tabular}{p{\haskellwidth}p{\pythonwidth}}
        \toprule
        Haskell & Python \\
        \midrule
\begin{minted}{haskell}
type M = (Int, Int)
instance CommutativeMonoid M where
  e = (infinity, infinity)
  (a1, a2) <> (b1, b2) = (c1, c2)
    where c1:c2:_ = 
      sort [a1, a2, b1, b2]

secondMinimum :: [Int] -> Int
secondMinimum = dec . agg . map enc
  where
    enc x = (x, infinity)
    agg = reduce (<>)
    dec (_, x2) = x2
\end{minted}
        & 
\begin{minted}{python}
class SecondMinCM(CommutativeMonoid[Tuple[int, int]]):
    @staticmethod
    def plus(
        a: Tuple[int, int], b: Tuple[int, int]
    ) -> Tuple[int, int]:
        c1, c2 = sorted([*a, *b])[:2]
        return (c1, c2)

    @staticmethod
    def id() -> Tuple[int, int]:
        return (INFINITY, INFINITY)

def secondMinimum(xs: List[int]) -> int:
    encoded = [(x, INFINITY) for x in xs]
    (_, x2) = SecondMinimumCM.reduce(encoded)
    return x2
\end{minted}
        \\
        \bottomrule
    \end{tabular}
    \end{adjustbox}
    \caption{\emph{Defining the \nth{2}-minimum commutative monoid.}}
    \label{snippet:def_2min}
\end{snippet}

\begin{snippet}[h!]
    \centering
    \begin{adjustbox}{center}
    \begin{tabular}{p{\haskellwidth}p{\pythonwidth}}
        \toprule
        Haskell & Python \\
        \midrule
\begin{minted}{haskell}
fold :: (a -> b -> b) -> b -> [a] -> b
fold f z [] = z
fold f z (x:xs) = f x (fold f z xs)
\end{minted}
        & 
\begin{minted}{python}
def fold(f: Callable[[B, A], B], z: B, xs: List[A]):
    accumulator = z
    for x in xs:
        accumulator = f(accumulator, x)
    return accumulator
\end{minted}
        \\
        \bottomrule
    \end{tabular}
    \end{adjustbox}
    \caption{\emph{Implementing a polymorphic fold over lists.} Note that, for idiomatic reasons, the Haskell implementation presents a \emph{right fold}, whereas the Python implementation presents a \emph{left fold} -- i.e. when folding $f$ over a list $[a,b,c]$, the Haskell implementation would return $f(a,f(b,f(c,z)))$ whereas the Python implementation would return $f(f(f(z,a),b),c)$.}
    \label{snippet:fold}
\end{snippet}

\begin{snippet}[h!]
    \centering
    \begin{adjustbox}{center}
    \begin{tabular}{p{\haskellwidth}p{\pythonwidth}}
        \toprule
        Haskell & Python \\
        \midrule
\begin{minted}{haskell}
rnnCell :: Learnable 
  (Vec R h1 -> Vec R h2 -> Vec R h2)
initialState :: Learnable (Vec R h2)

rnn :: Learnable 
  ([Vec R h1] -> Vec R h2)
rnn = fold rnnCell initialState
\end{minted}
        & 
\begin{minted}{python}
rnnCell: Callable[
    [HiddenState, InputState], HiddenState
]
initialState: HiddenState

def rnn(inputs: List[InputState]) -> HiddenState:
    return fold(rnnCell, initialState, inputs)
\end{minted}
        \\
        \bottomrule
    \end{tabular}
    \end{adjustbox}
    \caption{\emph{Implementing an RNN as a fold over lists.} Note that, as mentioned in Snippet~\ref{snippet:fold}, the RNN as implemented in Haskell will consume its list of input states `in reverse'.}
    \label{snippet:rnn}
\end{snippet}

\begin{snippet}[h!]
    \centering
    \begin{adjustbox}{center}
    \begin{tabular}{p{\haskellwidth}p{\pythonwidth}}
        \toprule
        Haskell & Python \\
        \midrule
\begin{minted}{haskell}
binOp :: Learnable 
  (Vec R h -> Vec R h -> Vec R h)
identity :: Learnable (Vec R h)

type HiddenState = Vec R h
instance (CommutativeMonoid 
  HiddenState) where
    e = identity; <> = binOp

aggregate :: Learnable 
  ([HiddenState] -> HiddenState)
aggregate = reduce (<>)
\end{minted}
        & 
\begin{minted}{python}
binOp: Callable[[HiddenState, HiddenState], HiddenState]
identity: HiddenState


class LearnableCommutativeMonoid(
    CommutativeMonoid[HiddenState]
):
    @staticmethod
    def plus(
        a: HiddenState, b: HiddenState
    ) -> HiddenState:
        return binOp(a, b)

    @staticmethod
    def id() -> HiddenState:
        return identity


def aggregate(xs: List[HiddenState]) -> HiddenState:
    return LearnableCommutativeMonoid.reduce(xs)
\end{minted}
        \\
        \bottomrule
    \end{tabular}
    \end{adjustbox}
    \caption{\emph{Defining a learnable commutative monoid over hidden states.} We assume we have access to a learnable binary operation $binOp\in (\R^h \times \R^h) \to \R^h$ and a learnable vector $identity \in \R^h$.}
    \label{snippet:lcm}
\end{snippet}

\begin{snippet}[ht!]
    \centering
    \begin{adjustbox}{center}
    \begin{tabular}{p{\haskellwidth}p{\pythonwidth}}
        \toprule
        Haskell & Python \\
        \midrule
\begin{minted}{haskell}
binaryGRU :: Learnable 
  (Vec R h -> Vec R h -> Vec R h)
binaryGRU v1 v2 = do
  g <- new (gruCell 
    (InputDim h) (HiddenDim h))
  return (g v1 v2 + g v2 v1) / 2
\end{minted}
        & 
\begin{minted}{python}
class BinaryGRU:
    def __init__(self, h):
        self.gruCell: Callable[
            [HiddenState, HiddenState], HiddenState
        ] = GRUCell()

    def __call__(self, x: HiddenState, y: HiddenState):
        return (
            (self.gruCell(x, y) + self.gruCell(y, x)) / 2
        )
\end{minted}
        \\
        \bottomrule
    \end{tabular}
    \end{adjustbox}
    \caption{\emph{Defining the Binary-GRU operator.} In Haskell, we present this via a (hypothetical) monadic API for defining neural networks; in Python, we define a class in the style of TensorFlow / PyTorch modules.}
    \label{snippet:binary_gru}
\end{snippet}

\end{document}